\numberwithin{equation}{section}
\theoremstyle{definition}
\newtheorem{definition}{Definition}[section]
\theoremstyle{plain}
\newtheorem{proposition}[definition]{Proposition}
\theoremstyle{remark}
\definecolor{bestbg}{RGB}{229,245,224}   
\definecolor{secondbg}{RGB}{222,235,247} 
\newenvironment{myitem}
  {\begin{itemize}[label=\textcolor{gray!70}{\ding{110}},
                   leftmargin=0em,
                   labelsep=0.5em,
                   itemindent=1.2em,
                   itemsep=0.5em,
                   topsep=0.5em,
                   parsep=0.5em,
                   partopsep=0pt]}
  {\end{itemize}}
\newenvironment{tritemize}
  {\begin{itemize}[label=\textcolor{gray!70}{\raisebox{0.2ex}{$\blacktriangleright$}}]}
  {\end{itemize}}
\newcommand{\R}{\mathbb{R}}
\newcommand{\E}{\mathbb{E}}
\newcommand{\x}{\mathbf{x}}
\newcommand{\vv}{\mathbf{v}}
\newcommand{\y}{\mathbf{y}}
\newcommand{\z}{\mathbf{z}}
\newcommand{\n}{\boldsymbol{\epsilon}}
\newcommand{\A}{\mathcal{A}}
\newcommand{\F}{\mathcal{F}}
\newcommand{\G}{\mathcal{G}}
\newcommand{\D}{\mathcal{D}}
\newcommand{\N}{\mathcal{N}}
\newcommand{\B}{\mathcal{B}}
\newcommand{\U}{\mathcal{U}}
\newcommand{\PP}{\mathcal{P}}
\newcommand{\prox}{\operatorname{prox}}
\newcommand{\argmin}{\operatorname{argmin}}
\newcommand{\tv}{\operatorname{TV}}
\newcommand{\id}{\operatorname{Id}}
\newcommand{\Afull}{\mathbf{A}}
\newcommand{\M}{\mathbf{M}}
\newcommand{\eps}{\epsilon}
\title{HyDRA: Hybrid Denoising Regularization for Measurement-Only DEQ Training}
\author{
Markus Haltmeier\thanks{Department of Mathematics, University of Innsbruck, Innsbruck, Austria.}
\and
Lukas Neumann\thanks{Institute of Basic Sciences in Engineering Science, University of Innsbruck, Austria.}
\and
Nadja Gruber\thanks{Department of Computer Science, University of Innsbruck, Austria.}
\and
Johannes Schwab\thanks{University of Applied Sciences Kufstein, Austria.}
\and
Gyeongha Hwang\thanks{Department of Mathematics, Yeungnam University, Gyeongsan, Korea. Corresponding author: \href{mailto:ghhwang@yu.ac.kr}{ghhwang@yu.ac.kr}.}
}
\date{} 
\begin{document}
\maketitle

\begin{abstract}
Solving image reconstruction problems of the form \(\A\x = \y\) remains challenging due to ill-posedness and the lack of large-scale supervised datasets. Deep Equilibrium (DEQ) models have been used successfully but typically require supervised pairs \((\x,\y)\). In many practical settings, only measurements \(\y\) are available. We introduce HyDRA (Hybrid Denoising Regularization Adaptation), a measurement-only framework for DEQ training that combines measurement consistency with an adaptive denoising regularization term, together with a data-driven early stopping criterion. Experiments on sparse-view CT demonstrate competitive reconstruction quality and fast inference.
\end{abstract}

\section{Introduction}
 Inveres 
problems arise in a wide range of scientific and engineering applications, including geophysical exploration, remote sensing, nondestructive testing, medical imaging, and various forms of microscopy \cite{bertero1998introduction,engl1996regularization,hansen2010discrete,scherzer2009variational}. After discretization, the objective is to recover an unknown signal $\x \in \R^n$ from noisy and potentially incomplete measurements $\y \in \mathbb{R}^m$ governed by
\begin{equation} \label{eq:ip}
    \y = (\A \x ) \odot \n,
\end{equation}
where $\A$ denotes a linear forward operator and $(\cdot) \odot \n$ is the measurement noise model. Such problems are typically \textit{ill-posed} and  $\A$ may be ill-conditioned or possess a nontrivial null space, making inversion unstable and non-unique. We consider a probabilistic setting where images $\x$, $\y$ follow  distributions $p_\x$, $p_\y$, but $\A$ is a fixed deterministic and known map.

We have the particularly challenging scenario of a severely under-determined $\A$ in mind, which additionally has unstable pseudo-inverse. A prototypical example is limited-data CT, where $\A = \M \Afull $, with $\Afull$ representing the discretized full-data Radon transform and $\M$ a masking or subsampling operator indicating that only a subset of measurements is available. Reducing cost, acquisition time, and radiation exposure motivates avoiding full measurements in practice. Classical reconstruction techniques address these issues through explicit regularization that encodes prior knowledge about $\x$, such as smoothness or sparsity \cite{scherzer2009variational}. In the prototypical variational regularization
$\min_{\x}\; \|\A \x - \y\|_2^2 + \alpha\, r(\x)$, the spatial prior is included as an additive penalty $\alpha\, r(\x)$ with $\alpha > 0$, while the term $\|\A \x - \y\|_2^2$ enforces data consistency. This framework includes classical methods such as Tikhonov regularization and total variation (TV) minimization, which have demonstrated broad utility. However, these hand-crafted priors often fail to capture the statistical complexity of natural images and may introduce artifacts such as oversmoothing or staircasing.

In recent years, deep learning methods have achieved state-of-the-art performance across a variety of image reconstruction problems. Here, reconstruction is performed by a parameterized map $\B_\theta \colon Y \to X$, where the high-dimensional parameter $\theta$ is fitted to available data. Most existing  approaches rely on supervised learning. Typical architectures include explicit residual-type networks~\cite{jin2017deep}, null-space-type networks~\cite{schwab2019deep}, and unrolled networks~\cite{sun2016deep,rick2017one,adler2017solving}. Another class comprises fixed-point networks~\cite{bai2019deep,gilton2021deep}, where $\B_\theta(\y)$ is the fixed point of an implicit layer. Under the umbrella of Deep Equilibrium (DEQ) models~\cite{gilton2021deep}, these implicit networks employ implicit differentiation and Jacobian-free backpropagation~\cite{fung2022jfb}, enabling effectively infinite-depth representations with constant memory. Despite their strong performance, these supervised methods require ground-truth paired samples $(\x,\y)$ for training, which are often costly or impractical to obtain in medical and scientific imaging, thereby limiting their applicability.

Unsupervised deep learning methods have recently been developed that operate directly on measurement samples $\y$ without requiring ground-truth data. The absence of ground truth reconstructions gives rise to two major obstacles: first, the measurements contain noise, and the ill-conditioning of $\A$ causes noise amplification when minimizing the data-space loss $\|\A \B_\theta(\y) - \y\|^2$; second, difficulties arise when the operator $\A$ is highly underdetermined, as in limited-data Radon transforms. In this case, $\A$ has a large nullspace, and any data-space loss is unable to constrain the nullspace component of $\B_\theta$. Some approaches mitigate these issues by assuming $\A$ is random~\cite{millard2023theoretical,yaman2020self,bubba2025tomoselfdeq}, others assume an invertible forward map~\cite{hendriksen2020noise2inverse}, others add problem-structured regularizers \cite{chen2021equivariant}, and others employ surrogate measurement-space losses~\cite{gruber2024sparse2inverse}. Implicit fixed-point networks and DEQ methods, however, have not yet been used in the self-supervised context.

In this work, we propose a Hybrid Denoising Regularization Adaptation (HyDRA) framework for unsupervised image reconstruction using fixed-point networks. HyDRA is trained with an unsupervised loss that is the sum of a data-consistency term and a denoising-based regularization term that accounts for both the null space and the ill-posedness of the forward operator. HyDRA learns image structure directly from measurement samples $\y$ without requiring ground-truth samples $\x$. The framework includes a fixed-point architecture that allows incorporation of known priors, such as positivity or sparsity, and a data-driven early-stopping criterion that stabilizes training and prevents overfitting. Efficient inference is achieved through equilibrium solving, enabling fast reconstruction. We evaluate HyDRA on the challenging sparse-view CT reconstruction task and show that it recovers high-quality images from severely limited measurements, outperforming both classical variational methods and supervised deep learning baselines.

\section{Background}

Solving the inverse problem \eqref{eq:ip} amounts to finding a reconstruction map $\B \colon \mathbb{R}^m \to \mathbb{R}^n$ that maps noisy data $\y$ back to the unknown image $\x$. A wide range of methods have been proposed, assuming different levels of prior information to construct $\B$. In this work, we construct $\B$ by selecting a specific map from the parameterized family $(\B_\theta)_{\theta \in \Theta}$, where the parameter $\theta$ is determined during training. Different situations arise depending on the type of data available:

\begin{tritemize}
   \item \textsc{Single-shot:}
    Here the reconstruction map is optimized on the same measurement that it is later applied to, resulting in a single-instance or per-sample procedure, as used in DIPs or physics-based fitting.

    \item \textsc{Measurement-only:} 
    Here, only measurement samples \(\y\) are available. Learning relies on self-supervised or unsupervised approaches that do not require ground-truth \(\x\) samples.

 \item \textsc{Supervised learning:} Here matched pairs \((\x,\y)\) are available, enabling supervised training of mappings between the measurement domain and the target domain.
\end{tritemize}

The first two classes have the advantage of not requiring ground-truth data $\x$. The single-shot methods have the further advantage of not using training data at all, thus being cost-effective during data collection. However, training at inference yields time-consuming reconstruction, especially when combined with modern learning techniques. The unsupervised method allows  faster inference at the cost of the training phase  based on  $\y$-data.  Specific methods also differ in the architectures used. Below, we provide a concise overview, focusing on the concepts that underpin HyDRA.

\subsection{Single-shot reconstruction}

In this class of methods, no additional samples of $\x$, $\y$ besides the specific observed measurements are required. Classical regularization methods, DIP (DIP), as well as Plug-and-Play (PnP) fall into this category.

\begin{myitem}
\item \textsc{Regularization methods:} The first are classical regularization methods where $\B$ is selected from a family $(\B_\alpha)_{\alpha>0}$ that pointwise converges to a right inverse of $\A$ as $\alpha \to 0$. The methods we use here are FBP (filtered backprojection) or variational regularization, respectively:
\begin{align} \label{eq:reg-fbp}
\B_\alpha(\y) &\triangleq \A^T\big(\F_\alpha \y\big), 
\\ \label{eq:reg-var}
\B_\alpha(\y) &\triangleq \argmin_\x \,\|\A\x - \y\|^2 + \alpha\, r(\x) \, .
\end{align}
In FBP, regularization is achieved by the spectral filter $\F_\alpha$, whereas in variational  regularization  the prior information is incorporated via the regularization functional $r \colon \R^n \to [0,\infty]$. The latter is commonly approximated via iterative methods derived from the optimality condition to~\eqref{eq:reg-var}.  A prototypical example is the proximal gradient (PGD) method,
\begin{equation}\label{eq:PGD}
\x_{k+1} = \prox_{s\,\alpha\, r}\!\left(\x_k - 2 s\,\alpha\, \A^\top(\A\x_k - \y)\right),
\end{equation}
with step size $s>0$. Several learning-based reconstruction methods originate from PGD and its variants.

\item \textsc{PnP:} PnP methods integrate state-of-the-art denoisers with classical optimization. Building upon PGD, PnP replaces the proximal operator $\prox_{s \alpha r}$ with a (potentially learned) denoiser $\D_\alpha(\cdot)$, yielding the update
\begin{equation}\label{eq:pnp-PGD}
\x_{k+1} = \D_\alpha\!\big(\x_k - 2 s\, \A^\top(\A\x_k - \y)\big).
\end{equation}
When the denoiser is set to the proximal map ($\D_\alpha = \prox_{s \alpha r}$) of the scaled regularizer $s \alpha r$, this reduces to classical PGD~\eqref{eq:PGD}. The use of a general denoiser enables regularization through data-driven priors encoded in the trained denoiser $\D_\alpha$. Note that~\eqref{eq:pnp-PGD} is a fixed-point iteration for $\D_\alpha \circ \big(\id - 2 s\, \A^\top(\A(\cdot) - \y)\big)$. A regularization theory for such implicit networks has been derived in~\cite{ebner2024plug}. When $\D_\alpha$ is of non-gradient form, this extends classical variational regularization~\eqref{eq:reg-var}. Although PnP achieves state-of-the-art performance across diverse imaging modalities, its effectiveness heavily depends on the denoiser’s training domain, and performance can degrade when the denoiser is mismatched to the target data distribution.

\item \textsc{DIP:} A different line of single-shot methods is DIP, introduced in~\cite{ulyanov2018deep}. DIP exploits the implicit regularization of convolutional neural networks without requiring external training data. The reconstructed image is represented as $\B(\y, \z) = f_{\theta(\y, \z)}(\z)$, where $f_\theta$ is a network that uses the random vector $\z$ as input, and the parameters $\theta = \theta(\y, \z)$ are determined by minimizing
\begin{equation}\label{eq:dip}
    \min_\theta \, \big\| \A f_\theta(\z) - \y \big\|_2^2
\end{equation}
combined with early stopping. The implicit prior contained in the network architecture has demonstrated surprisingly high-quality reconstructions without any supervision or strong explicit regularizer. However, DIP suffers from long inference times and strong sensitivity to the stopping point, which heavily influences reconstruction quality.

\end{myitem}

\subsection{Supervised reconstruction}

Most existing state-of-the-art learning approaches rely on supervised learning.
These methods minimize the supervised loss $\E\!\left[\|\B_\theta(\y) - \x\|^2\right]$, where $\E$ denotes the expectation over the joint distribution of $(\x,\y)$.
In practice, given paired samples $(\x_i,\y_i)$, this loss is approximately minimized over a chosen architecture $(\B_\theta)_{\theta\in\Theta}$.
Proposed methods differ significantly in the employed architectures; we provide a brief summary below.

\begin{myitem}

\item \textsc{Residual networks:} Residual networks have the form $\B_\theta = (\id + \U_\theta)\, \A^\sharp$, where $\A^\sharp \colon \R^m \to \R^n$ is an initial reconstruction map (near-inverse) and $\U_\theta$ is a standard image-to-image network such as the U-net~\cite{ronneberger2015unet}.
While very simple, this approach turns out to be highly effective with high-quality results~\cite{jin2017deep}.
However, plain residual architectures can exhibit data mismatch for inputs outside the training distribution.
Null-space networks address this by enforcing data consistency $\A\x \simeq \y$ in the architecture.
Specifically, the null-space networks proposed in~\cite{schwab2019deep} are of the form $
\B_\theta = \big(\id + \PP_{\operatorname{ker}\A}\, \U_\theta\big)\, \A^\sharp_\alpha$
where $\A^\sharp_\alpha$ arises from a regularization method.
Opposed to standard residual networks, the  added component $\PP_{\operatorname{ker}\A}\, \U_\theta$ acts only in the null space of $\A$ and therefore does not alter data consistency.
Moreover,~\cite{schwab2019deep} shows that this yields a data-driven regularization method.
Variations that also allow changes in the orthogonal complement have been proposed in~\cite{chen2020deep,goppel2023data,schwab2020big}.

\item \textsc{Unrolled networks:} Another line of work aims to improve data consistency via network cascades that $N$ times alternate between an image-to-image network and a layer enforcing data consistency. A prototypical example is learned proximal gradient, where $\B_\theta(\y)$  is defined by finite number of updates $\x_{k+1} = \D_\theta^{(k)} (\x_k - 2s  \A^T(\A\x_k - \y) )$ and where $\D_\theta^{(k)}$ are standard image-to-image networks. Related approaches include learned gradient descent \cite{adler2017solving}, learned ADMM \cite{sun2016deep}, learned primal-dual~\cite{adler2018learned}, network cascades~\cite{schlemper2017deep,kofler2018u}, and variational networks~\cite{hammernik2018learning}. However, at least theoretically, such networks may still lack strict data consistency for measurements outside the training distribution.

\item \textsc{DEQ:}
While motivated by iterative schemes like~\eqref{eq:PGD}, unrolled networks are only loosely related, as they use different networks at each iteration and do not arise from an underlying fixed-point equation. This limitation is addressed by DEQ models, which are supervised implicit networks for solving~\eqref{eq:ip}. A representative example is DEQ-prox, where reconstructions are defined via the fixed point equation
\begin{equation}\label{eq:deq-PG}
    \x^*  \triangleq \D_\theta (\x^* - \A^T(\A\x^* - \y) ).
\end{equation}
When solved with a corresponding fixed-point iteration, this recovers \eqref{eq:pnp-PGD}. Unlike PnP, where $D_\theta$ is a fixed map such as  pretrained denoiser, DEQ trains the implicit network end-to-end.
DEQ enables training with constant memory and adaptive inference by solving for the fixed point using tools such as Anderson acceleration~\cite{walker2011anderson}. Gradients are obtained via implicit differentiation and can be approximated through Jacobian-free backpropagation~\cite{fung2022jfb}. 
\end{myitem}

\subsection{Unsupervised reconstruction (measurements-only)}

Unsupervised reconstruction determines the reconstruction map from samples of measurement data $\y$. The challenge is how to extract information about the images $\x$ to be recovered solely from $\y$.

\begin{myitem}

\item \textsc{Data-consistency loss:} One approach is to supervise a network $\B_\theta  = f_\theta \circ \A^\sharp$ via data-consistency (DC) on the training data, potentially adding a regularization term,
\begin{equation}\label{eq:unsuper}
    \min_{\theta}\; \E_\y \Bigl[ \bigl\| \A(\B_\theta \y) - \y \bigr\|^2 + \alpha\, r(\B_\theta \y) \Bigr] \,.
\end{equation}
Effectively, this solves the variational regularization jointly via the network $\B_\theta$ instead of minimizing~\eqref{eq:reg-var} separately for each $\x$. While quite natural and used in a few works~\cite{zhang2021general,hellwege2025unsupervised,tamir2019unsupervised,wang2020neural}, this approach seems not very common. One reason may be that it heavily depends on the choice of the regularizer $r$, which can introduce bias that interferes with the information contained in $\y$. On the other hand, in the unregularized case $\alpha = 0$, the method suffers from noise as well as the underdeterminedness of $\A$. A recent development \cite{chen2021equivariant} proposes the regularizer $r(\x)= \E_{g} \| (\B_\theta \A - \id) (T_g \x)\|^2$ designed to promote equivariance for a class of image-space transforms $(T_g)_{g \in G}$.

\item \textsc{Self-supervised loss:}
Self-supervision refers to generating secondary data from $\y$ (by adding noise or masking) that is then used to supervise the reconstruction. These approaches were first developed for image denoising~\cite{batson2019noise2self,moran2020noisier2noise}, where $\A$ is the identity, and were later extended to general inverse problems~\cite{millard2023theoretical,yaman2020self,bubba2025tomoselfdeq,hendriksen2020noise2inverse,gruber2024sparse2inverse,gruber2025noisier2inverse,schut2025equivariance2inverse,tachella2024unsure,chen2022robust}. 
A powerful class of approaches follows the self-supervised Noise2Self strategy~\cite{batson2019noise2self} and its extensions to inverse problems~\cite{hendriksen2020noise2inverse,gruber2024sparse2inverse}. Several works deal with random operators~\cite{bubba2025tomoselfdeq,millard2023theoretical,yaman2020self}, whereas we focus on a fixed forward operator $\A$ here. These issues are addressed in~\cite{gruber2024sparse2inverse}, where the proposed Sparse2Inverse (S2I) uses a reconstruction operator of the form $\B = f \circ \B_I$, with $\B_I$ denoting a linear reconstruction from a subset of measurements $\y_I$ and $f$ a neural network acting in the image domain. To account for null-space components due to incomplete measurements, the method employs a data-domain loss,
\begin{equation}\label{eq:s2i}
    \E\, \big\| A_I \! \big( f \circ \B^c_{I}(\y^c_{I}) \big) - \y_I \big\|^2.
\end{equation}
This formulation enables the network to learn null-space components and effectively suppress missing-data artifacts in sparse-angle CT, combining the benefits of self-supervision with the implicit prior induced by CNN architectures. However, relying on a data-space loss and comparing against noisy measurements introduces challenges related to regularization, stability, and the design of appropriate stopping criteria.

\end{myitem}

\section{HyDRA Framework}

In this section, we introduce HyDRA, a novel self-supervised framework for solving linear inverse problems. HyDRA uses an implicit fixed-point architecture and is trained with a self-supervised loss that uses samples of indirect and incomplete data $\y$ only. The proposed unsupervised loss combines an  data-consistency term with a self-supervised regularization term. An additional key component is a data-driven stopping criterion that automatically guides when to stop training to avoid overfitting. These elements are described in detail in the following subsections.

\subsection{HyDRA architecture}

The proposed HyDRA architecture employs a fixed-point formulation in which the learned reconstruction operator $\B_\theta$ is defined implicitly through a fixed-point equation.

\begin{definition}[HyDRA architecture] \label{def:HyDRA}
Suppose
\begin{tritemize}
\item $\A \colon \R^n \to \R^m$ is a given forward operator,
\item $(\D_\theta)_{\theta \in \Theta}$ is a family of contractions $\D_\theta \colon \R^n \to \R^n$,
\item $\Omega \colon \R^n \to \R^n$ is a nonexpansive map,
\item $s \in (0,1/\|\A\|^2)$ and $\lambda \in (0,1)$ are parameters.
\end{tritemize}
We define the HyDRA architecture $(\B_\theta)_{\theta \in \Theta}$ with $\B_\theta \colon \R^m \to \R^n$, $\y \mapsto \x^\ast$, implicitly through the fixed-point equation
\begin{equation} \label{eq:HyDRA}
    \x^\ast = \N_\theta(\G(\x^\ast,\y))\,,
\end{equation}
where
\begin{align}
    \G(\x,\y) &\triangleq \x -  2s\, \A^\top(\A\x - \y), \label{eq:HyDRA-G} \\
    \N_\theta(\vv) &\triangleq \Omega\!\bigl(\lambda\, \D_\theta(\vv) + (1-\lambda)\, \vv \bigr). \label{eq:HyDRA-N}
\end{align}
Here $\A$, $\Omega$, $s$, and $\lambda$ are considered fixed components of the architecture and are not included explicitly in the notation.
\end{definition}

According to Definition~\ref{def:HyDRA}, HyDRA realizes the reconstruction as the fixed point of \eqref{eq:HyDRA}. It incorporates a trainable regularization network $\D_\theta$ together with an operator $\Omega$ that can be used to enforce additional known physical constraints. The HyDRA fixed-point map in \eqref{eq:HyDRA} is the composition of the gradient map \eqref{eq:HyDRA-G} and the learnable component \eqref{eq:HyDRA-N}. Note that if $\Omega = \operatorname{Id}$ is the identity operator, then the HyDRA architecture reduces to the DE-Prox architecture proposed in~\cite{gilton2021deep}. In the case $\Omega = \mathcal{P}_+$, the projection onto the set of nonnegative images, HyDRA coincides with the architecture introduced in~\cite{bubba2025tomoselfdeq}. However, these methods are either fully supervised~\cite{gilton2021deep} or rely on random forward maps~\cite{bubba2025tomoselfdeq} where the data contains missing information. In contrast, we train \eqref{eq:HyDRA} in a fully self-supervised manner where none of the data samples contains information missing in $\A\x$.

A key question is the existence and uniqueness of fixed points; otherwise, the architecture \eqref{eq:HyDRA} would not be well defined.

\begin{proposition}[Existence and uniqueness]
For any $\y \in \R^m$ and $\theta \in \Theta$, the map $\N_\theta \circ \G(\cdot,\y) \colon \R^n \to \R^n$ is a contraction and thus has exactly one fixed point. In particular, $\B_\theta$ is a well-defined implicit network.
\end{proposition}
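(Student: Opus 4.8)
The plan is to invoke the Banach fixed-point theorem on $\R^n$ equipped with the Euclidean metric (which is complete), so the crux is to establish that $\N_\theta \circ \G(\cdot,\y)$ is a contraction. I would control the Lipschitz constant of each of the two factors separately and then multiply them.

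First I would analyze the gradient map $\G(\cdot,\y)$. For fixed $\y$ it is affine, $\G(\x,\y) = (\id - 2s\,\A^\top\A)\,\x + 2s\,\A^\top\y$, so its Lipschitz constant equals the operator norm of the linear part $T \triangleq \id - 2s\,\A^\top\A$. Since $\A^\top\A$ is symmetric positive semidefinite with spectrum contained in $[0,\|\A\|^2]$, the eigenvalues of $T$ have the form $1 - 2s\mu$ with $\mu \in [0,\|\A\|^2]$. The constraint $s \in (0,1/\|\A\|^2)$ forces $2s\mu \in [0,2)$, hence every eigenvalue of $T$ lies in $(-1,1]$; because $T$ is symmetric, $\|T\| = \rho(T) \le 1$, and therefore $\G(\cdot,\y)$ is nonexpansive.

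Next I would show that $\N_\theta$ is itself a contraction, which is where the strict contractivity of the full composition originates. By the triangle inequality, the inner convex combination $\vv \mapsto \lambda\,\D_\theta(\vv) + (1-\lambda)\,\vv$ has Lipschitz constant at most $\lambda L + (1-\lambda)$, where $L < 1$ is the Lipschitz constant of the contraction $\D_\theta$. Rewriting this as $1 - \lambda(1-L)$ and using $\lambda \in (0,1)$ together with $L < 1$ shows it is strictly below $1$. Composing with the nonexpansive map $\Omega$ cannot increase this constant, so $\N_\theta$ is a contraction with constant $\kappa \triangleq 1 - \lambda(1-L) < 1$.

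Finally I would combine the two estimates: the composition $\N_\theta \circ \G(\cdot,\y)$ has Lipschitz constant at most $\kappa \cdot 1 = \kappa < 1$, so it is a contraction, and Banach's theorem yields a unique fixed point $\x^\ast$, making $\B_\theta \colon \y \mapsto \x^\ast$ well defined. I do not expect a serious obstacle here; the only point requiring genuine care is the spectral bound $\|T\| \le 1$, where one must use both that the step size is capped by $s < 1/\|\A\|^2$ and that symmetry lets one identify the operator norm with the spectral radius. A subtlety worth emphasizing is that $\G$ is merely nonexpansive rather than contractive (its eigenvalue is exactly $1$ on $\ker \A$), so the entire contraction factor must be furnished by $\N_\theta$; this is precisely why the architecture insists that $\D_\theta$ be a genuine contraction and that $\lambda > 0$.
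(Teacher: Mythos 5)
Your proof is correct and follows essentially the same route as the paper: show that $\G(\cdot,\y)$ is nonexpansive via the spectrum of $\id - 2s\,\A^\top\A$, show that $\N_\theta$ is a contraction because the convex combination $\lambda\,\D_\theta + (1-\lambda)\id$ is contractive and $\Omega$ is nonexpansive, then compose and apply Banach's fixed-point theorem. The only difference is that you make explicit what the paper leaves implicit, namely the quantitative constant $1-\lambda(1-L)$ and the observation that the full contraction factor must come from $\N_\theta$ since $\G$ has eigenvalue $1$ on $\ker\A$.
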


\begin{proof}
The map $\x \mapsto \A^\top(\A\x - \y)$ is affine with linear part $\A^\top \A$, whose spectrum lies in $[0,\|\A\|^2]$. Moreover, the eigenvalues of $\id - 2s  \A^\top\A$ lie in $[1 - 2s\|\A\|^2, 1]$, which shows that $\G(\cdot,\y)$ is nonexpansive for $0 \le s \le 1/\|\A\|^2$.
By assumption, $\D_\theta$ is a contraction    thus the convex combination $\lambda\, \D_\theta + (1-\lambda) \id $ is a contraction too. 
Since $\Omega$ is nonexpansive, $\N_\theta = \Omega \circ (\lambda\, \D_\theta + (1-\lambda) \id )$ is a contraction. Therefore, the composition $\N_\theta \circ \G(\cdot,\y)$ is a contraction and by Banach's fixed-point theorem, it has a unique fixed point.
\end{proof}

\subsection{HyDRA loss}

The parameters $\theta$ in HyDRA are determined by minimizing the hybrid loss
\begin{equation} \label{eq:HyDRA-loss}
    \mathcal{H}(\theta) =
     \E_{\y} \left[ 
      \bigl\| \A \B_\theta (\y) - \y \bigr\|_2^2
    \right]
       \\ +
    \gamma \E_{\y,\eps}  \Bigl[ 
     \bigl\| \D_\theta(\B_\theta (\y) + \eps)  - \B_\theta (\y) \bigr\|_2^2
    \Bigr],
\end{equation}
where $\eps$ denotes additive noise.  
It consists of an unsupervised data-consistency loss in measurement space, $\| \A \B_\theta (\y) - \y \|_2^2$, together with an additional adaptive denoising regularization term
\begin{equation}
\mathcal{L}_{\mathrm{R}}
   \triangleq 
   \E_{\y,\eps} \Bigl[ 
      \bigl\| \D_\theta(\B_\theta (\y) + \eps) - \B_\theta (\y) \bigr\|_2^2
    \Bigr].
\end{equation}
By minimizing the hybrid loss, the network $\D_\theta$ learns an implicit image prior that captures the underlying statistical structure of clean images and effectively suppresses noise and artifacts. Joint training of denoising and reconstruction provides a powerful regularization mechanism for $\B_\theta$, enhancing reconstruction fidelity and stability even in the absence of ground-truth supervision.

The key component of HyDRA is the self-supervised adaptive regularization term, which trains $\D_\theta$ to be an effective denoiser on the class of reconstructions produced by $\B_\theta$. In this way, the denoiser adapts to the reconstruction manifold, yielding a task-aligned implicit prior. This loss is explicitly designed to make the regularization network $\D_\theta$ function as an implicit denoiser.

\subsection{Automated Stopping}

As in other self-supervised methods using a data-space loss, proper early stopping is crucial for achieving high-quality reconstructions. This reflects the missing data and the inherent instability of inverting $\A$ in the presence of noisy measurements.

We propose a novel, data-driven early-stopping criterion by monitoring the Peak Signal-to-Noise Ratio (PSNR),
\begin{equation}\label{eq:validation}
    \mathrm{PSNR}\!\big(\B_\theta(\y),\, \B_\theta(\A\,\B_\theta(\y))\big).
\end{equation}
and halting training when this value reaches its maximum. We refer HyDRA with automated stopping based on \eqref{eq:validation} as HyDRA-auto, opposed to any manual stopping.    

The rationale is as follows. A well-trained reconstruction operator $\B_\theta$ should be robust and stable. Taking its output $\B_\theta(\y)$, which is assumed to be a clean estimate of the signal, and generating synthetic measurements $\A \B_\theta(\y)$, the operator $\B_\theta$ should be able to reconstruct these synthetic measurements accurately. If the early-stopping metric begins to decrease, this indicates a divergence between $\B_\theta(\y)$ and $\B_\theta(\A \B_\theta(\y))$, signaling that $\B_\theta$ has begun to overfit to noise and artifact patterns present in the training measurements $\y$.

\section{Numerical experiments}
\label{sec:results}

To validate the effectiveness of HyDRA, we consider the sparse-view CT reconstruction problem. The forward model for sparse-view CT is formulated in the form \eqref{eq:ip} where $\A = \M \Afull$, with $\Afull \in \mathbb{R}^{p \times n}$ denoting the full-angle Radon transform and $\M \in \{0,1\}^{m \times p}$ a sampling matrix that selects a sparse subset of angular measurements. 
All experiments are performed on a workstation equipped with an Intel Core i9--10900K CPU and an NVIDIA RTX~3090 GPU.

\subsection{Data set}

For training and evaluation we use the low-dose parallel-beam (LoDoPaB) CT dataset~\cite{leuschner2021lodopab}, which contains two-dimensional CT images: $35{,}820$ training, $3{,}522$ validation, and $3{,}553$ test samples. Each image has a spatial resolution of $362 \times 362$ pixels. For our experiments, we retain the original data split but randomly select $10\%$ of each split to construct smaller training, validation, and test subsets. Sparse-view CT sinograms are synthesized using a parallel-beam acquisition geometry with $16$, $32$, and $64$ projection angles and $362$ equidistant detector elements per projection. To simulate low-intensity photon counts, Poisson noise corresponding to $1000$ photons per detector pixel is applied to the projection data.

\subsection{Compared methods}

We compare HyDRA against methods designed for a fixed forward operator $\A$ that are either single-shot estimators or are trained using only measurement samples $\y$ (self-supervised). This excludes (i) supervised methods, (ii) methods relying on randomized $\A$ during training, and (iii) approaches that explicitly encode invariances. We also include ablations that remove or replace key components of HyDRA. 

The compared self-supervised methods are:
\begin{tritemize}
\item FBP: Filtered backprojection \eqref{eq:reg-fbp}.
\item TV-reg: Variational regularization \eqref{eq:reg-var} with $r=\tv$.
\item DIP: single-shot Deep Image Prior via \eqref{eq:dip}.
\item S2I: U-Net trained with the self-supervised loss \eqref{eq:s2i}.
\item DEQ-plain: \eqref{eq:unsuper} with a DEQ model and $r=0$.
\item DEQ-TV: \eqref{eq:unsuper} with a DEQ model and $r=\tv$.
\item HyDRA-max: Proposed method with optimal stopping.
\item HyDRA-auto: Proposed method with  automated stopping.
\end{tritemize}

Here, FBP, TV-reg, and DIP are single-shot methods. S2I is a state-of-the-art self-supervised baseline that does not incorporate explicit invariance priors. DEQ-plain and DEQ-TV share the same measurement-space loss as HyDRA but omit HyDRA’s additional components, serving as ablations to quantify their impact.

\subsection{Implementation Details}

For the fixed-point architecture (Definition~\ref{def:HyDRA}), the network $\D_\theta$ is a U-Net, $\Omega$ denotes the projection onto the cube $[0,1]^n$, and parameters are chosen as $\gamma = \|\A\|^2$ and $\lambda = 0.4$ (selected empirically to balance stability and convergence speed of the equilibrium iterations). To solve the DEQ fixed-point equation \eqref{eq:HyDRA}, Anderson acceleration~\cite{walker2011anderson, scieur2016regularized} is used; the iteration proceeds until the relative norm of the difference between successive iterates falls below $10^{-3}$ or a maximum of $50$ iterations is reached. For minimizing the loss \eqref{eq:HyDRA-loss} with the HyDRA architecture, we use Jacobian-Free Backpropagation~\cite{fung2022jfb}. The noise level for the denoising regularization term $\mathcal{L}_{\mathrm{R}}$ is fixed at $15\%$, empirically determined to balance reconstruction fidelity and noise robustness. Optimization is performed using Adam~\cite{kingma2015adam} with a learning rate of $10^{-5}$ and a batch size of $1$. Training is conducted for $5 \times 10^{4}$, $1.5 \times 10^{5}$, and $2 \times 10^{6}$ iterations for the $16$-, $32$-, and $64$-view sparse reconstruction settings, respectively. The validation metric for early stopping is evaluated every $10^{3}$ iterations.

For TV-reg and DEQ-TV, the regularization parameter  is selected by a small grid search $\alpha \in \{10^{-4}, 10^{-3}, 10^{-2}, 10^{-1}\}$ to maximize PSNR with respect to the ground-truth images. For DIP, we adopt the implementation of \ \cite{ulyanov2018deep} and use two standard stabilizers: (i) additive input noise and (ii) an exponential moving average of recent network outputs \cite{cheng2019bayesian}. DIP is trained with early stopping at the PSNR peak, capped at $5000$ iterations. S2I \cite{gruber2024sparse2inverse} is included as a self-supervised baseline tailored to sparse-view CT and is trained for $500$ epochs with early stopping at the best PSNR. Both DEQ and DEQ-TV use the same early-stopping rule  based on \eqref{eq:validation}.

As can be seen, TV regularization provides a strong traditional baseline and offers substantial improvements over FBP, particularly in terms of SSIM. DIP achieves high PSNR values but exhibits reduced structural fidelity in the $16$- and $64$-view cases, where its SSIM falls below that of TV regularization. S2I performs competitively across all configurations. Notably, it achieves the highest SSIM among the baselines in the $32$- and $64$-view settings, highlighting its ability to preserve structural consistency without relying on ground-truth images. Its PSNR, however, remains slightly lower than that of DIP for the $16$-view setup. HyDRA is terminated automatically by our proposed self-supervised stopping criterion.

\begin{table}[htb!]
\caption{PSNR, SSIM, and inference time across methods for 16/32/64 projections. Best per column is highlighted in green; second-best in blue. FBP reference uses 1000 angles and roughly 4000 photons and is not part of the  comparison.}
\label{tab:results}
\centering
\resizebox{\textwidth}{!}{
\begin{tabular}{
    l
    S[table-format=2.2] S[table-format=1.3] S[table-format=3.4]
    S[table-format=2.2] S[table-format=1.3] S[table-format=3.4]
    S[table-format=2.2] S[table-format=1.3] S[table-format=3.4]
}
\toprule
\textbf{Method}
& \multicolumn{3}{c}{\textbf{16}} 
& \multicolumn{3}{c}{\textbf{32}} 
& \multicolumn{3}{c}{\textbf{64}} \\
\cmidrule(lr){2-4} \cmidrule(lr){5-7} \cmidrule(lr){8-10}
& \textbf{PSNR} & \textbf{SSIM} & \textbf{Time (s)}
& \textbf{PSNR} & \textbf{SSIM} & \textbf{Time (s)}
& \textbf{PSNR} & \textbf{SSIM} & \textbf{Time (s)} \\
\midrule
\multicolumn{10}{l}{\textit{Full Views, High Dose}}\\
FBP reference & 28.93 & 0.661 & \multicolumn{1}{c}{--}
                            & 28.93 & 0.661 & \multicolumn{1}{c}{--}
                            & 28.93 & 0.661 & \multicolumn{1}{c}{--} 
                            \\ \midrule
\addlinespace
\multicolumn{10}{l}{\textit{Classical baselines}}\\
FBP (Sparse)   & 7.55  & 0.221 & {\cellcolor{bestbg}\bfseries 0.0037}
               & 9.94  & 0.235 & {\cellcolor{bestbg}\bfseries 0.0044}
               & 12.52 & 0.258 & {\cellcolor{bestbg}\bfseries 0.0048} \\
TV-reg         & {\cellcolor{secondbg}26.24} & {\cellcolor{bestbg}\bfseries 0.702} & 4.998
               & 27.06 & 0.664 & 5.0972
               & 28.41 & 0.723 & 5.2463 \\
\addlinespace
\multicolumn{10}{l}{\textit{Learning baselines}}\\
DIP            & {\cellcolor{bestbg}\bfseries 26.25} & 0.606   & 207.79
               & {\cellcolor{bestbg}\bfseries 27.58} & 0.650   & 213.99
               & 28.75                               & 0.691   & 214.19 \\
S2I            & 23.91 & 0.582 & {\cellcolor{secondbg}0.0851}
               & 26.79 & 0.674 & {\cellcolor{secondbg}0.0355}
               & {\cellcolor{secondbg}29.31} & {\cellcolor{bestbg}\bfseries 0.746} & {\cellcolor{secondbg}0.0630} \\
\addlinespace
\multicolumn{10}{l}{\textit{Ablations}}\\
DEQ-plain      & 25.06 & 0.569 & 0.3408
               & 26.29 & 0.590 & 0.4916
               & 26.98 & 0.637 & 0.3485 \\
DEQ-TV         & 25.12 & 0.572 & 0.3702
               & 26.27 & 0.610 & 0.4795
               & 27.17 & 0.650 & 0.3131 \\
\addlinespace
\multicolumn{10}{l}{\textit{Proposed}}\\
HyDRA-max          & 25.54 & {\cellcolor{secondbg}0.628} & 0.4448
               & {\cellcolor{secondbg}27.28} & {\cellcolor{bestbg}\bfseries 0.680} & 0.4676
               & {\cellcolor{bestbg}\bfseries 29.50} & {\cellcolor{secondbg}0.733} & 0.5017 \\
HyDRA-auto     & 25.50 & 0.621 & 0.4587
               & 27.08 & {\cellcolor{secondbg}0.678} & 0.4329
               & 29.24 & 0.726 & 0.3795 \\
\bottomrule
\end{tabular}
}
\end{table}

\subsection{Results}

Table~\ref{tab:results} summarizes reconstruction performance in terms of PSNR and Structural Similarity Index Measure (SSIM) across the sparse-view settings ($16$, $32$, and $64$ projections).
HyDRA-max, demonstrates highly competitive performance with a superior balance of metrics. This is also supported by   visual results shown in Figures~\ref{fig:examples_16},~\ref{fig:examples_32} and~\ref{fig:examples_64}. HyDRA proves to be more robust than S2I; it consistently outperforms S2I in PSNR across all view configurations. Furthermore, the performance of HyDRA-auto remains comparable to HyDRA-max, validating that our proposed stopping criterion effectively identifies a near-optimal operating point without ground-truth access. Note also that 
in the $64$-view setting, the HyDRA-auto variant achieves the highest PSNR across all methods, surpassing both S2I and DIP.  HyDRA-auto for $64$ angles even approaches the baseline result obtained using $1000$ angles and a higher photon count of approximately $4000$ photons per bin. The ablation studies (DEQ and DEQ-TV, which use the same data-consistency as HyDRA but with either no regularization or a classical regularization term) demonstrate that the denoising regularization term $\mathcal{L}_{\mathrm{R}}$ is an essential ingredient of HyDRA, yielding significant improvements in reconstruction performance.

\begin{figure}[htb!]
\centering
\caption{Example reconstructions for 16 views.}
\label{fig:examples_16}
\includegraphics[width=\textwidth]{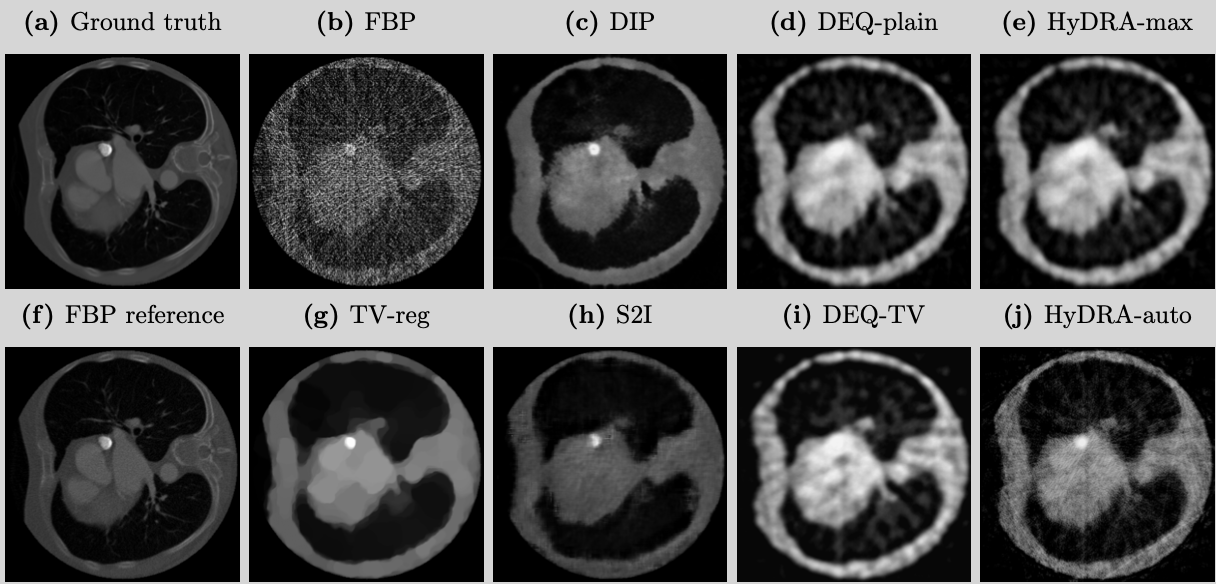}
\end{figure}

\begin{figure}[htb!]
\centering
\caption{Example reconstructions for 32 views.}
\label{fig:examples_32}
\includegraphics[width=\textwidth]{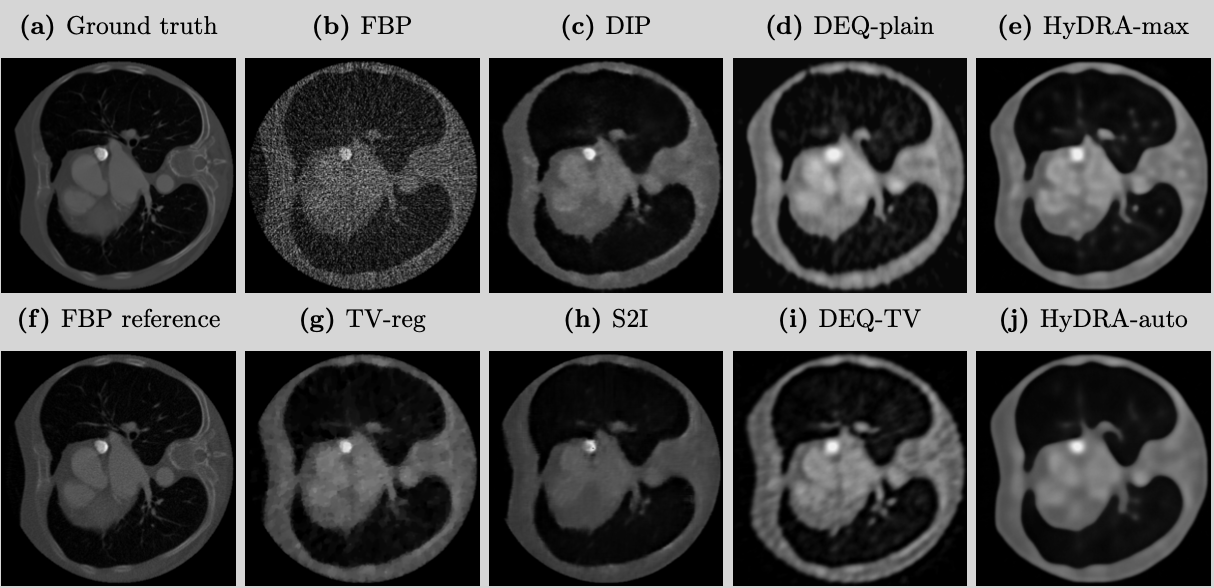}
\end{figure}

\begin{figure}[htb!]
\centering
\caption{Example reconstructions for 64 views.}
\label{fig:examples_64}
\includegraphics[width=\textwidth]{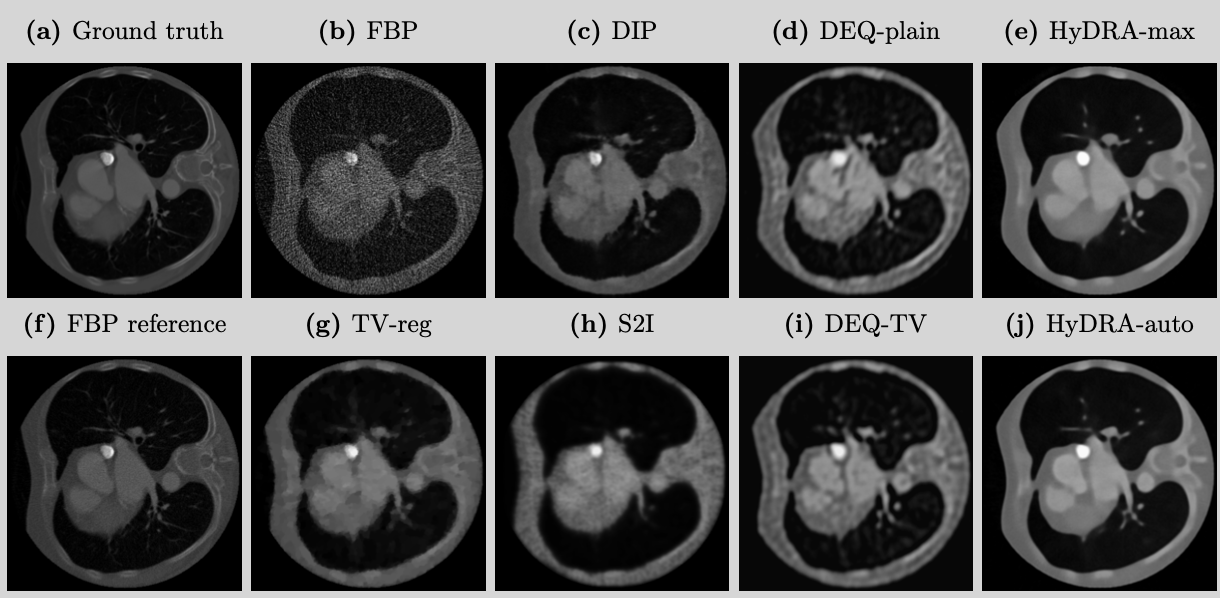}
\end{figure}

The computational efficiency results in Table \ref{tab:results} reveal notable differences across methods. DIP requires more than 200 seconds per reconstruction, making it impractical for real-time or large-scale applications. S2I is significantly faster, requiring only tens of milliseconds per reconstruction, reflecting its simple feed-forward architecture. HyDRA achieves a favorable balance, requiring roughly $0.4$--$0.5$ seconds per reconstruction—an order of magnitude faster than TV-reg and dramatically more efficient than DIP, while maintaining high reconstruction quality.

\section{Conclusion}
In this paper, we introduced HyDRA (Hybrid Denoising Regularization Adaptation), a novel self-supervised framework for solving ill-posed inverse problems based on DEQ models. HyDRA successfully integrates the memory-efficient, fixed-point inference of DEQ with a powerful, internally trained denoising regularizer. This hybrid design is trained end-to-end using a self-supervised loss that learns an implicit image prior directly from the measurements, completely obviating the need for paired ground-truth data. The proposed early-stopping criterion further enhances robustness by preventing overfitting to noise.

Our validation on the sparse-view CT reconstruction task showed HyDRA's performance. It achieves high-quality reconstructions, outperforming all baselines in the 64-view setting in both PSNR and SSIM. The most significant practical advantage of HyDRA is its computational efficiency. By leveraging Anderson acceleration for equilibrium solving, HyDRA achieves inference speeds approximately $470\times$ faster than DIP and $10\times$ faster than classical TV-regularization.

This work bridges the gap between the implicit regularization of self-supervised methods and the structured efficiency of equilibrium models. By providing high reconstruction quality, ground-truth-free training, and practically fast inference, HyDRA presents a compelling and viable solution for real-world inverse problems. Future research could involve applying HyDRA to a wider range of imaging modalities, such as MRI and computational microscopy.

\section*{Acknowledgment}
This work was supported by the National Research Foundation of Korea(NRF) grant funded by the Korea government(MSIT) (RS-2024-00333393).


\end{document}